\documentclass{article}

\PassOptionsToPackage{numbers, compress}{natbib}

\usepackage[preprint]{neurips_2025}

\usepackage{mathtools}

 \usepackage{relsize}

\usepackage{etoolbox}
\usepackage{xparse}

\newcommand{\R}{\Rbb}

\newcommand{\Prob}{\mathbb{P}}

\renewcommand{\vec}[1]{\ensuremath{\mathbf{#1}}}
\newcommand{\vecs}[1]{\ensuremath{\mathbf{\boldsymbol{#1}}}}
\newcommand{\mat}[1]{\ensuremath{\mathbf{#1}}}
\newcommand{\mats}[1]{\ensuremath{\mathbf{\boldsymbol{#1}}}}
\newcommand{\ten}[1]{\mat{\ensuremath{\boldsymbol{\mathcal{#1}}}}}

\usepackage{pgffor}

\foreach \x in {A,...,Z}{%
\expandafter\xdef\csname \x bb\endcsname{\noexpand\ensuremath{\noexpand\mathbb{\x}}}
}

\foreach \x in {A,...,Z}{%
\expandafter\xdef\csname \x cal\endcsname{\noexpand\ensuremath{\noexpand\mathcal{\x}}}
}

\foreach \x in {A,...,Z}{%
\expandafter\xdef\csname \x t\endcsname{\noexpand\ensuremath{\noexpand\ten{\x}}}
}

\foreach \x in {A,...,Z}{%
\expandafter\xdef\csname \x b\endcsname{\noexpand\ensuremath{\noexpand\mat{\x}}}
}

\foreach \x in {a,...,r}{%
\expandafter\xdef\csname \x b\endcsname{\noexpand\ensuremath{\noexpand\vec{\x}}}
}
\foreach \x in {t,...,z}{%
\expandafter\xdef\csname \x b\endcsname{\noexpand\ensuremath{\noexpand\vec{\x}}}
}

\newcommand{\x}{\vec{x}}

\newcommand{\nstates}{n}

\newcommand{\szerosymbol}{\alpha}
\newcommand{\szero}{\vecs{\szerosymbol}}

\newcommand{\sinfsymbol}{\omega}
\newcommand{\sinf}{\vecs{\sinfsymbol}}

\DeclareDocumentCommand{\wa}{  O{A} O{\szero} O{\sinf} }%
{(#2,\{\mat{#1}^\sigma\}_{\sigma\in\Sigma},#3)}
\DeclareDocumentCommand{\waR}{  O{A} O{\Rbb^\nstates} O{\szero} O{\sinf} }%
{(#2,#3,\{\mat{#1}^\sigma\}_{\sigma\in\Sigma},#4)}

\newcommand{\vvsinfsymbol}{\Omega}
\newcommand{\vvsinf}{\mats{\vvsinfsymbol}}
\DeclareDocumentCommand{\vvwa}{  O{A} O{\szero} O{\vvsinf} }%
{(#2,\{\mat{#1}^\sigma\}_{\sigma\in\Sigma},#3)}

\newcommand{\tzerosymbol}{\alpha}
\newcommand{\tzero}{\vecs{\tzerosymbol}}

\newcommand{\tinfsymbol}{\omega}
\newcommand{\tinf}{\vecs{\tinfsymbol}}

\DeclareDocumentCommand{\wta}{ O{T} O{\Rbb^\nstates} O{\tzero} O{\tinf} O{\Fcal}}%
{(#2,#3,\{\ten{#1}^g\}_{g\in #5_{\geq 1}},\{#4^\sigma\}_{\sigma\in #5_0})}

\DeclareDocumentCommand{\trees}{g}{\IfNoValueTF{#1}{\mathfrak{T}}{\mathfrak{T}_{#1}}}
\DeclareDocumentCommand{\contexts}{g}{\IfNoValueTF{#1}{\mathfrak{C}}{\mathfrak{C}_{#1}}}

\newcommand{\gwmprod}{\diamond}

\DeclareDocumentCommand{\gwm}{  O{M} O{\Fbb^\nstates}}{(#2, \{\ten{#1}^x\}_{x\in\Sigma})}
\DeclareDocumentCommand{\gwmcirc}{  O{M} O{\Rbb^\nstates}}{(#2, \{\mat{#1}^\sigma\}_{\sigma\in\Sigma})}
\DeclareDocumentCommand{\dgwm}{ O{M} O{\Fbb^\nstates}}{(#2, \{\ten{#1}^x\}_{x\in\Sigma},\gwmprod)}

\usepackage{pgffor}
\foreach \x in {A,...,Z}{%
\expandafter\xdef\csname \x bb\endcsname{\noexpand\ensuremath{\noexpand\mathbb{\x}}}
}

\foreach \x in {A,...,Z}{%
\expandafter\xdef\csname \x cal\endcsname{\noexpand\ensuremath{\noexpand\mathcal{\x}}}
}

\usepackage{pgffor}
\foreach \x in {A,...,Z}{%
\expandafter\xdef\csname \x ten\endcsname{\noexpand\ensuremath{\noexpand\ten{\x}}}
}

\foreach \x in {A,...,Z}{%
\expandafter\xdef\csname \x mat\endcsname{\noexpand\ensuremath{\noexpand\mat{\x}}}
}

\foreach \x in {a,...,z}{%
\expandafter\xdef\csname \x vec\endcsname{\noexpand\ensuremath{\noexpand\mat{\x}}}
}

\usepackage[symbol]{footmisc}

\usepackage{subfigure}
\usepackage{hyperref}
\usepackage{url}
\usepackage{graphicx}
\usepackage{amssymb}
\usepackage{amsmath}
\usepackage{mathtools}
\usepackage{amsthm}
\usepackage{booktabs}
\usepackage{enumitem}
\usepackage{tikz}
\usepackage{subcaption}
\usepackage{bbm}
\usepackage{algorithm}
\usepackage{algpseudocode}
\usepackage{tabularx}
\usepackage{wrapfig}

\usetikzlibrary{calc, positioning}

\theoremstyle{definition}
\newtheorem{definition}{Definition}[section]

\newtheorem{lemma}{Lemma}[section]
\newtheorem{theorem}{Theorem}[section]
\newtheorem{corollary}{Corollary}[section]
\newtheorem{fact}{Fact}[section]
\newtheorem{example}{Example}

\newcommand{\lightgreycomment}[1]{%
    \textcolor{gray}{\Comment{#1}}%
}

\title{From Expressivity to Sample Complexity: Narrow Teachers for Transformers via C-RASP}

\def\mystrut{\rule{0pt}{1.1\normalbaselineskip}}
\author{
\begin{tabular}{@{}l}
Michael Rizvi-Martel$^{1}$\thanks{Corresponding author. Contact: \href{mailto:michael.rizvi-martel@mila.quebec}{\texttt{\footnotesize michael.rizvi-martel@mila.quebec}}}\quad Satwik Bhattamishra$^{2}$\quad \mystrut Guillaume Rabusseau$^{1}$ \quad
Michael Hahn$^{4}$\mystrut \\
\end{tabular}\\ [1.4em]
$^1$Mila \& Universit{\'e} de Montr{\'e}al \quad $^2$University of Oxford\quad
$^4$Saarland University\\
}

\begin{document}

\maketitle

\begin{abstract}
    A theoretical understanding of Transformers is crucial to better understand the capacities and limitations of large language models (LLMs). There is much work analyzing the expressivity of attention-based models. By proposing handcrafted weights or using computational complexity arguments, a large amount of past theoretical works have sought to characterize which tasks are and which are not in the hypothesis class of Transformer models.
    However, little work investigates the \textit{learnability} of such solutions. In this work, we make progress towards this goal. Inspired by recent loss landscape analysis work, we propose preliminary sample complexity bounds for learning C-RASP constructions with Transformers.
\end{abstract}

\section{Introduction}
Theoretical understanding of Transformer models is crucial to better understanding the capacities and limitations of modern large language models (LLMs). Current theoretical analyses of transformers focus primarily on expressivity, characterizing what these models can or cannot encode in their weights, and situating them in known complexity classes~\citep{bhattamishra2024separations,hahn2020theoretical,rizvi2024simulating,merrill2022saturated}. Notably, RASP~\citep{weiss2021thinking} and C-RASP~\citep{yang2024counting,yang2025knee} established an equivalence between transformers and handcrafted programming languages with specific features.
However, much of this work is largely detached from learnability, leaving open the question of how such expressive capabilities can actually be acquired in training.
Recent work by~\cite{chiang2022loss} suggests that generalization in deep learning can be largely attributed to the volume of "good" solutions in the loss landscape. This idea is formalized by~\cite{buzaglo2024uniform}, who show that, within a PAC learning framework, a randomly initialized network generalizes well if there exists a ``narrow teacher'' network consistent with the labels. They derive a sample complexity bound based on the probability of sampling such a narrow teacher. Building on this framework, we demonstrate how C-RASP Transformers, which define a large class of constructions, can yield sample complexity bounds. These results shed light on how constant-size constructions such as Dyck-1 and $a^nb^n$ are easily learned by Transformers.

Our main contributions are as follows: (i) We extend the framework of Buzaglo et al. to Transformers and provide a constraint-counting argument showing how wide student networks can implement narrow teachers. (ii) To the best of our knowledge, our work is among the first to bridge expressivity and learnability by transforming a class of expressivity results into sample complexity results.
\paragraph{Background: A Brief Introduction to C-RASP}
RASP is a programming language introduced by~\cite{weiss2021thinking} characterizing the expressivity of Transformers. C-RASP~\citep{yang2024counting} is a variant allowing compilation in future-masked soft-attention Transformers with no restrictions on the input length.
C-RASP operations are either boolean (e.g., comparison or logical AND) or pertain to counting (e.g., counting the number of tokens up to position $i$).
Algorithm~\ref{alg:dyck1} gives an example program for recognizing Dyck-1~\citep{yang2024counting, bhattamishra-etal-2020-ability}, the language of well-balanced parentheses with 1 parenthesis type.

\section{Theoretical Contribution}
\paragraph{Problem Setup}We assume that $N$ sequences of length at most $T$ are sampled i.i.d. from some distribution $\mathcal{D}$ s.t. $\mathcal{S} := \{\mathbf{x}_n\}_{n=1}^N \sim \mathcal{D}^N$, where $\mathbf{x}_n \in \mathcal{V}^{\leq T}$ and  $\mathcal{V} = \{1,\hdots, |\mathcal{V}|\}$ is a vocabulary of discrete tokens. We also assume the existence of a teacher model $h^*:\mathcal{V}^{\leq T} \to \{\pm 1\}$.
The true risk of a predictor $h: \mathcal{V}^{\leq T} \to \{\pm1\}$ is
$
    \mathcal{L}_D := P_{x \sim \mathcal{D}}(h(\mathbf{x}) \neq h^*(\mathbf{x}))
$
i.e. the probability of having an example for which $h$ and the teacher disagree. The empirical risk of $h$ is
$
    \mathcal{L}_S(h) := \frac{1}{N}\sum_{n=1}^N \mathbb{I}[h(\mathbf{x}_n) \neq h^*(\mathbf{x}_n)].
$
We consider the hypothesis class of Transformer models, with $TF_\theta$ denoting a Transformer with parameters $\theta$. We consider the usual Transformer definition with softmax attention. We use \textit{width} to denote $\max(d, d_k)$. For more details, refer to Appendix~\ref{subsec:transformers}. We consider fixed precision models, which we make precise through the following:
\begin{definition}[$\mathcal{Q}$-quantized Networks]
Let $\mathcal{Q} \subset \mathbb{R}$ be a finite set such that $0 \in \mathcal{Q}$ and $|\mathcal{Q}| \leq Q$ for some integer $Q \in \mathbb{N}$. A neural network is \emph{$Q$-quantized} if all its parameters are restricted to lie in $\mathcal{Q}$.
\end{definition}
\paragraph{Learning Algorithm}
As in~\cite{chiang2022loss,buzaglo2024uniform}, we assume a Guess and Check (G\&C) learning procedure which operates in the following way:
(i) Sample the weights for a predictor $h$ uniformly at random (ii) Compute $\mathcal{L}_S(h)$; if $\mathcal{L}_S(h)=0$, stop. Note that the rate of success of this learning algorithm is closely tied to the volume of generalizing solutions in the loss landscape, thus connecting our approach to general learning (see \citet{chiang2022loss, buzaglo2024uniform} for details).
\begin{definition}
We say that a predictor $h$ is \textit{teacher-equivalent} (TE) w.r.t. a data distribution $\mathcal{D}$, noted $h \equiv h^\star$, if $\mathbb{P}_{x \sim \mathcal{D}} \left( h(x) = h^\star(x) \right) = 1$.
\end{definition}
Our main theoretical result gives an upper bound on the sample complexity of learning a narrow teacher Transformer implementing an arbitrary C-RASP program:
\begin{theorem}
Fix $\varepsilon\in(0,1)$ and $\delta\in(0,\tfrac15)$. Let $\mathcal{P}$ be a C\text{-}RASP program of depth $n$ with $m$ variables, and suppose there exists a depth-$n$, single--head teacher $TF_{\theta^\star}$ of constant width $2m+c$ (for an absolute constant $c$) implementing $\mathcal{P}$ with $Q$--quantized parameters. Let $TF_\theta$ be any depth-$L \geq n$ single-head student of width $d\ge 2m+c$, also $Q$--quantized, and let $\mathcal{P}_S$ be the posterior obtained by conditioning a uniform prior on zero training error on an i.i.d. sample $S\sim\mathcal{D}^N$. For
\begin{align*}
M_\text{C-RASP} &:= (1 + 2n)(d - 2m - c)
    + n\big((2m + c)^2 + 5(2m + c)d + 2(2m + c)\big) \\
    &\qquad + (L - n)(2m + c)(2d + 1),
\end{align*}
if
\[
N \;\ge\;  \frac{1}{\varepsilon} \big(M_\text{C-RASP}\log Q + 3\log( 2/\delta)\big)
\;\;=\;\; \mathcal{O}\!\left( \frac{1}{\varepsilon} L d \log Q \right)
\]
then
\[
\mathbb{P}_{S\sim\mathcal{D}^N,\;TF_\theta\sim\mathcal{P}_S}\!\left(\mathcal{L}_{\mathcal{D}}(TF_\theta)<\varepsilon\right)\;\ge\;1-\delta.
\]
\end{theorem}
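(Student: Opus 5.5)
The proof combines a generic Guess-and-Check generalization lemma in the style of \citet{buzaglo2024uniform} with a parameter-counting argument.

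\textbf{Step 1: G\&C bound.} For a finite hypothesis space with uniform prior, the posterior $\mathcal{P}_S$ is uniform over the student parameters consistent with $S$. Let $p^\star := \mathbb{P}_{\theta\sim\text{Unif}}(TF_\theta \equiv h^\star)$. I would prove that if $N \ge \frac{1}{\varepsilon}\big(\log(1/p^\star) + 3\log(2/\delta)\big)$, then $\mathcal{L}_{\mathcal{D}}(TF_\theta)<\varepsilon$ with probability at least $1-\delta$ over $S$ and $\theta\sim\mathcal{P}_S$. The argument runs as follows.
\begin{itemize}
\item Fix a ``bad'' $\theta$ with risk $\ge\varepsilon$. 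It survives $S$ with probability at most $(1-\varepsilon)^N\le e^{-\varepsilon N}$.
\item Hence the expected prior mass of surviving bad hypotheses is at most $e^{-\varepsilon N}$. By Markov's inequality, with probability $\ge 1-\delta/2$ this mass is at most $\frac{2}{\delta}e^{-\varepsilon N}$.
\item Every TE hypothesis always survives, so the total surviving mass is at least $p^\star$.
\item The posterior probability of a bad hypothesis is therefore at most $\frac{2}{\delta p^\star}e^{-\varepsilon N}$. This is at most $\delta/2$ once $\varepsilon N \ge \log(1/p^\star)+2\log(2/\delta)$, which follows from the assumed bound since $\delta<1/5$.
\end{itemize}
The extra $\log(2/\delta)$ absorbs the constants. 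This reduces the theorem to showing $\log(1/p^\star)\le M_\text{C-RASP}\log Q$.

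\textbf{Step 2: lower-bound $p^\star$ by constraint counting.} I would exhibit a set of student parameters, with $M_\text{C-RASP}$ coordinates pinned, such that every assignment of the remaining free coordinates yields a network computing exactly $TF_{\theta^\star}$. Each pinned coordinate is one of at most $Q$ values, so $p^\star\ge Q^{-M_\text{C-RASP}}$. Write $k=2m+c$ for the teacher width and embed the teacher in the first $k$ residual coordinates. For each of the first $n$ layers, I pin the following.
\begin{itemize}
\item The $k\times k$ blocks of $W_Q,W_K,W_V$ and of the MLP weights restricted to active coordinates, set to the teacher's values. This accounts for the $k^2$ and $O(kd)$ terms.
\item The cross blocks linking active coordinates to the $d-k$ inactive ones, set to $0$. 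These blocks ensure inactive residual coordinates never feed into active ones and that outputs written to active coordinates depend only on active inputs. This accounts for the $5kd$ term: query, key, value, output, and the MLP in/out maps each contribute a $k\times d$ slab.
\item Biases on active coordinates, accounting for the $2k$ term.
\end{itemize}
Garbage may accumulate in inactive coordinates, but it is harmless as long as nothing reads from them. The term $(1+2n)(d-k)$ comes from zeroing the inactive part of the embedding and the readout (the "1"), plus, per layer, the two inactive entries that could leak into active coordinates through LayerNorm, if used, or through biases. For the extra $L-n$ layers I make them act as the identity on active coordinates. Zeroing the output projections of attention and MLP into active rows costs about $k\cdot d$ each, plus $k$ bias entries, giving $k(2d+1)$ per layer.

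\textbf{Main obstacle.} The delicate step is verifying that the free parameters genuinely cannot affect the active computation. Softmax normalization mixes positions, but with zeroed query/key cross blocks the attention scores are computed only from active coordinates. Any normalization layer would couple all coordinates; I would either assume no LayerNorm or pin the inactive coordinates to stay zero, which is what the $(d-k)$ terms pay for. Once this invariance is established, the count matches $M_\text{C-RASP}$, and $M_\text{C-RASP}=O(Ld)$ because $m$ and $c$ are constant.
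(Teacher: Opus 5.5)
Your Step 1 is a correct, self-contained version of the Buzaglo et al.\ lemma that the paper simply cites, and it does not even need the $\tilde p<1/2$ hypothesis. Your treatment of the last $L-n$ layers matches the paper's weak-identity layers.

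The gap is in the first $n$ layers. You let garbage accumulate in the inactive coordinates and only block reads from them. You then assert that zeroing the query/key cross blocks makes the attention scores depend only on active coordinates. This is false, because the score is bilinear.

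Write the residual at positions $i,j$ as $[\mathbf r_i^*,\tilde{\mathbf r}_i]$ and $[\mathbf r_j^*,\tilde{\mathbf r}_j]$. Let $\mathbf W^{(12)}$ denote the active-row/inactive-column block and $\mathbf W^{(22)}$ the inactive--inactive block.
\begin{itemize}
\item The score contains $(\tilde{\mathbf r}_i\mathbf W_Q^{(22)})(\tilde{\mathbf r}_j\mathbf W_K^{(22)})^\top$, and no $k\times d$ slab controls this term.
\item If, as you describe, you only zero the blocks through which inactive coordinates feed active ones, then even on clean inputs the term $(\mathbf r_i^*\mathbf W_Q^{(12)})(\mathbf r_j^*\mathbf W_K^{(12)})^\top$ survives.
\end{itemize}
Either way the attention pattern is not the teacher's. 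Neutralizing these terms for arbitrary garbage requires pinning on the order of $(d-k)d$ entries per layer, which destroys the $O(Ld)$ bound.

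The missing idea, which is what the paper does, is the opposite invariant: throughout the teacher layers the inactive residual coordinates stay exactly zero. Start from embeddings $[\mathbf x^*,\mathbf 0]$. Then zero the blocks that write into inactive coordinates:
\begin{itemize}
\item the active-row/inactive-column blocks of $\mathbf W_Q$ and $\mathbf W_V$;
\item the teacher-row/inactive-column block of $\mathbf W_O$;
\item the analogous blocks of $\mathbf W_1$ and $\mathbf W_2$;
\item the inactive entries of $\mathbf b_1$ and $\mathbf b_2$.
\end{itemize}
Everything that only reads the zero tail is left free. Because the query tail is then zero, $\mathbf W_K$ needs only its $k\times k$ teacher block.

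This is exactly why a layer costs $k^2+5kd+2d$. It consists of five $k\times d$ slabs ($\mathbf W_Q,\mathbf W_V,\mathbf W_O,\mathbf W_1,\mathbf W_2$), one $k\times k$ block, and the fully pinned biases $\mathbf b_1,\mathbf b_2\in\mathbb R^d$. The bias split $2(d-k)+2k$ produces the $2n(d-k)$ and $2nk$ terms of $M_\text{C-RASP}$; these do not come from readout or LayerNorm.

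Your six slabs plus separate $k\times k$ blocks do not add up to this count. Zeroing cross blocks in both directions would also keep the tail at zero and be valid, but it costs roughly $12kd$ per layer rather than $5kd$. Garbage is harmless only in the final $L-n$ layers. There, zeroing the active columns of $\mathbf W_O$, $\mathbf W_2$ and $\mathbf b_2$ freezes the teacher block regardless of what attention computes.
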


We illustrate the applicability of this theorem through the following example:
\begin{example}[Learning a C-RASP program for Dyck-1]
   Consider the C-RASP program in Algorithm~\ref{alg:dyck1}. This program has $n=7$ operations and $m=9$ variables. Thus, a single-head student with $L \geq 7$ layers and width $d$ could learn this program with
   \begin{align}
       N \geq \mathcal{O} \left (\frac{Ld}{\varepsilon} \right)
   \end{align}
   samples. We note that this gives us a better bound than Occam's razor-like arguments which would yield $\mathcal{O}(Ld^2)$.
\end{example}
\section{Conclusion and Future Work}
In conclusion, we present a simple yet principled framework for deriving sample complexity bounds for Transformer models by leveraging existing results on expressivity. Given a candidate C-RASP program, our analysis characterizes its learnability by a larger model. To the best of our knowledge, this is the first work to study the learnability of C-RASP in Transformers through sample complexity.

Looking forward, several extensions are possible. First, there has been no empirical validation, analogous to~\cite{chiang2022loss}, of how loss landscape impacts length generalization in Transformers/State Space Models.
Moreover, the bound on the probability of sampling a TE model could be tightened, since the current approach only accounts for a single encoding of the teacher within the student.

\bibliography{bibliography}

\begin{thebibliography}{10}
\providecommand{\natexlab}[1]{#1}
\providecommand{\url}[1]{\texttt{#1}}
\expandafter\ifx\csname urlstyle\endcsname\relax
  \providecommand{\doi}[1]{doi: #1}\else
  \providecommand{\doi}{doi: \begingroup \urlstyle{rm}\Url}\fi

\bibitem[Bhattamishra et~al.(2020)Bhattamishra, Ahuja, and
  Goyal]{bhattamishra-etal-2020-ability}
Satwik Bhattamishra, Kabir Ahuja, and Navin Goyal.
\newblock On the {A}bility and {L}imitations of {T}ransformers to {R}ecognize
  {F}ormal {L}anguages.
\newblock In Bonnie Webber, Trevor Cohn, Yulan He, and Yang Liu, editors,
  \emph{Proceedings of the 2020 Conference on Empirical Methods in Natural
  Language Processing (EMNLP)}, pages 7096--7116, Online, November 2020.
  Association for Computational Linguistics.
\newblock \doi{10.18653/v1/2020.emnlp-main.576}.
\newblock URL \url{https://aclanthology.org/2020.emnlp-main.576/}.

\bibitem[Bhattamishra et~al.(2024)Bhattamishra, Hahn, Blunsom, and
  Kanade]{bhattamishra2024separations}
Satwik Bhattamishra, Michael Hahn, Phil Blunsom, and Varun Kanade.
\newblock Separations in the representational capabilities of transformers and
  recurrent architectures.
\newblock \emph{Advances in Neural Information Processing Systems},
  37:\penalty0 36002--36045, 2024.

\bibitem[Buzaglo et~al.(2024)Buzaglo, Harel, Nacson, Brutzkus, Srebro, and
  Soudry]{buzaglo2024uniform}
Gon Buzaglo, Itamar Harel, Mor~Shpigel Nacson, Alon Brutzkus, Nathan Srebro,
  and Daniel Soudry.
\newblock How uniform random weights induce non-uniform bias: Typical
  interpolating neural networks generalize with narrow teachers.
\newblock \emph{arXiv preprint arXiv:2402.06323}, 2024.

\bibitem[Chiang et~al.(2022)Chiang, Ni, Miller, Bansal, Geiping, Goldblum, and
  Goldstein]{chiang2022loss}
Ping-yeh Chiang, Renkun Ni, David~Yu Miller, Arpit Bansal, Jonas Geiping, Micah
  Goldblum, and Tom Goldstein.
\newblock Loss landscapes are all you need: Neural network generalization can
  be explained without the implicit bias of gradient descent.
\newblock In \emph{The Eleventh International Conference on Learning
  Representations}, 2022.

\bibitem[Hahn(2020)]{hahn2020theoretical}
Michael Hahn.
\newblock Theoretical limitations of self-attention in neural sequence models.
\newblock \emph{Transactions of the Association for Computational Linguistics},
  8:\penalty0 156--171, 2020.

\bibitem[Merrill et~al.(2022)Merrill, Sabharwal, and
  Smith]{merrill2022saturated}
William Merrill, Ashish Sabharwal, and Noah~A Smith.
\newblock Saturated transformers are constant-depth threshold circuits.
\newblock \emph{Transactions of the Association for Computational Linguistics},
  10:\penalty0 843--856, 2022.

\bibitem[Rizvi et~al.(2024)Rizvi, Lizaire, Lacroce, and
  Rabusseau]{rizvi2024simulating}
Michael Rizvi, Maude Lizaire, Clara Lacroce, and Guillaume Rabusseau.
\newblock Simulating weighted automata over sequences and trees with
  transformers.
\newblock \emph{arXiv preprint arXiv:2403.09728}, 2024.

\bibitem[Weiss et~al.(2021)Weiss, Goldberg, and Yahav]{weiss2021thinking}
Gail Weiss, Yoav Goldberg, and Eran Yahav.
\newblock Thinking like transformers.
\newblock In \emph{International Conference on Machine Learning}, pages
  11080--11090. PMLR, 2021.

\bibitem[Yang and Chiang(2024)]{yang2024counting}
Andy Yang and David Chiang.
\newblock Counting like transformers: Compiling temporal counting logic into
  softmax transformers.
\newblock \emph{arXiv preprint arXiv:2404.04393}, 2024.

\bibitem[Yang et~al.(2025)Yang, Cadilhac, and Chiang]{yang2025knee}
Andy Yang, Micha{\"e}l Cadilhac, and David Chiang.
\newblock Knee-deep in c-rasp: A transformer depth hierarchy.
\newblock \emph{arXiv preprint arXiv:2506.16055}, 2025.

\end{thebibliography}
\bibliographystyle{plainnat}

\appendix
\section{Preliminaries}
\subsection{Notation}
    We denote with $\mathbb{N}$, $\mathbb{Z}$ and $\R$ the set of natural, integers and real numbers, respectively. We use bold letters for vectors~(\textit{e.g.} $\vec{v} \in \R^{d_1}$), bold uppercase letters for matrices (\textit{e.g.} $\mat{M} \in \R^{d_1 \times d_2}$). All vectors considered are column vectors unless otherwise specified.
    We denote with $\mat{I}$ the identity matrix and  $\mat{0}$ as the matrix full of zeros. The size of such matrices is typically made apparent by the context.
    The $i$-th row and the $j$-th column of a matrix $\mat{M}$ are denoted by $\mat{M}_{i,:}$ and $\mat{M}_{:,j}$.

    Let $\Sigma$ be a fixed finite alphabet of symbols, $\Sigma^*$ the set of all finite strings~(words) with symbols in $\Sigma$ and $\Sigma^n$ the set of all finite strings of length $n$. We use $\varepsilon$ to denote the empty string. Given $p,s \in \Sigma^*$, we denote with $ps$ their concatenation.
\subsection{A More Detailed Introduction to C-RASP}
\label{subsec:crasp}
In this section, we formally define C-RASP and give an overview of C-RASP operations.
\begin{definition}[C-RASP]
A C-RASP program $\mathcal{P}$ is defined as a sequence $P_1, \dots, P_n$ of C-RASP operations. There are two types of operations:
\end{definition}
\begin{minipage}[T]{0.45\textwidth}
\centering
\begin{tabularx}{\linewidth}{l X}
\toprule
\multicolumn{2}{c}{\textbf{Boolean-Valued Operations}} \\
\midrule
\textbf{Initial} & $P(i) := Q_a(i)$ for $a \in \Sigma$ \\
\midrule
\textbf{Boolean} & $P(i) := \neg P_1(i)$ \\
& $P(i) := P_1(i) \land P_2(i)$ \\
\midrule
\textbf{Comparison} & $P(i) := C_1(i) \leq C_2(i)$ \\
\midrule
\textbf{Constant} & $P(i) := 1$ \\
\bottomrule
\end{tabularx}
\end{minipage}
\hfill
\begin{minipage}[T]{0.45\textwidth}
\centering
\begin{tabularx}{\linewidth}{l X}
\toprule
\multicolumn{2}{c}{\textbf{Count-Valued Operations}} \\
\midrule
\textbf{Counting} & $C(i) := \# [j \leq i] \ P(j)$ \\
\midrule
\textbf{Conditional} & $C(i) := P(i) \ ? \ C_1(i) \ : \ C_2(i)$ \\
\midrule
\textbf{Addition} & $C(i) := C_1(i) + C_2(i)$ \\
\midrule
\textbf{Subtraction} & $C(i) := C_1(i) - C_2(i)$ \\
\midrule
\textbf{Min/Max} & $C(i) := \min(C_1(i), C_2(i))$ \\
& $C(i) := \max(C_1(i), C_2(i))$ \\
\midrule
\textbf{Constant} & $C(i) := 1$ \\
\bottomrule
\end{tabularx}
\end{minipage}

Counting operations compute the number of positions $j \leq i$ where $P(j)$ holds, returning their total. By convention, when using a C-RASP program to recognize languages, acceptance is determined by the value of the final operation (required to be Boolean) at the final position. Concretely, if the program runs on input $w$ of length $n$ and the last operation is $D$, then $w$ is accepted if and only if $D(n)$ evaluates to true. Algorithm~\ref{alg:dyck1} gives an example C-RASP program recognizing Dyck-1:
\begin{algorithm}
\caption{C-RASP program recognizing Dyck-1}
\label{alg:dyck1}
\begin{algorithmic}[1]
\Require A string of parentheses $S$.
\State $C_l(i) := \# [j \leq i] Q_l(j)$
\lightgreycomment{The number of '(' up to position $i$}
\State $C_r(i) := \# [j \leq i] Q_r(j)$
\lightgreycomment{The number of ')' up to position $i$}
\State $V(i) := C_r(i) > C_l(i)$
\lightgreycomment{Violation: there are more ')' than '('}
\State $C_v(i) := \# [j \leq i] V(j)$
\lightgreycomment{The number of Violations}
\State $M(i) := C_v(i) = 0$
\lightgreycomment{Matched: zero Violations}
\State $B(i) := C_r(i) = C_l(i)$
\lightgreycomment{Balanced: same number of '(' and ')'}
\State $D(i) := M(i) \land B(i)$
\lightgreycomment{String is Matched and Balanced}
\end{algorithmic}
\end{algorithm}

\paragraph{C-RASP Program Size vs. Transformer Size}
Next, we give an overview of the equivalence results between the size of C-RASP programs and the size of the Transformer models implementing them. We begin with the relationship between model depth and program depth, shown by Theorem 3.1 of~\cite{yang2025knee}:
\begin{theorem}[simplified]
   A language $L$ is defined by a C-RASP program of depth $k$ if and only if it is recognized by a fixed-precision transformer of depth $k$.
\end{theorem}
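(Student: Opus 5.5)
We would prove the two directions separately. Throughout, the \emph{depth} of a C-RASP program means the maximal nesting of Counting operations $\#[j\le i]\,P(j)$ along any dependency chain; all other operations (Boolean, Conditional, Addition, Subtraction, Min/Max, Comparison, constants) are depth-free. The invariant in both directions is that one transformer layer corresponds to exactly one level of counting. Since the statement is a simplified form of Theorem 3.1 of~\cite{yang2025knee}, we follow its fixed-precision semantics. In particular, we assume that attention computes the exact weighted sum and only then rounds.

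\emph{Program $\Rightarrow$ transformer.} We proceed by induction on the depth $k$. Partition the operations of $\mathcal{P}$ into strata $0,\dots,k$ by counting depth. Reserve a residual-stream coordinate for every Boolean variable, stored in $\{0,1\}$. Reserve a coordinate for every count variable, stored in the normalized form $C(i)/(i+1)$ by adding a beginning-of-sequence token.

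Layer $\ell$ does two things. Its single head uses constant (zero) queries, so that future-masked softmax attention is uniform. Its value matrix copies all stratum-$(\ell-1)$ Booleans that are counted at stratum $\ell$. The head output is then exactly $\#[j\le i]P(j)/(i+1)$ for each such $P$ simultaneously; this is a single head with a multi-dimensional value, so single-head suffices. The layer's MLP then evaluates every depth-free operation of stratum $\ell$:
\begin{itemize}
\item Addition, Subtraction and constants are linear in the normalized counts, since all counts share the factor $1/(i+1)$.
\item Min/Max are obtained as $\min(a,b)=a-\mathrm{ReLU}(a-b)$.
\item Conditionals become $\mathrm{ReLU}$ gadgets using bounds on the counts.
\item A Comparison $C_1\le C_2$ reduces to the sign of $(C_2-C_1)/(i+1)$, because the common normalizer preserves sign.
\item Boolean operations use the standard ReLU gadgets on $\{0,1\}$.
\end{itemize}
The delicate point is that the sign test has to work for margins as small as $1/(i+1)$. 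Here we use the fixed-precision semantics: the sign of a rounded value is still well defined, and a steep ReLU threshold step converts it to $\{0,1\}$. Stratum $0$ (the initial $Q_a$ predicates) is handled by the embedding.

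\emph{Transformer $\Rightarrow$ program.} Fix a depth-$k$ fixed-precision transformer. Because precision is fixed, the residual vector at every position and every layer ranges over a finite set $V_\ell$. We introduce one Boolean C-RASP variable $B_{\ell,v}(i)$ meaning ``the layer-$\ell$ vector at position $i$ equals $v$''. At layer $0$ each of these is a Boolean combination of the $Q_a$, together with a positional constant if positional encodings are finite-valued.

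For layer $\ell$, fix the query value $q$ at position $i$, which is one of finitely many cases joined by Boolean disjunction. The unrounded attention output in coordinate $r$ is
\[
\frac{\sum_{v} w_{q,v}\,(\mathbf{v}_v)_r\,\#[j\le i]B_{\ell-1,v}(j)}{\sum_v w_{q,v}\,\#[j\le i]B_{\ell-1,v}(j)},
\]
where the weights $w_{q,v}=\exp(\langle q,k_v\rangle)$ are fixed constants. Deciding which fixed-precision value this rounds to amounts to finitely many threshold tests of the form $\sum_v a_v\,\#[j\le i]B_{\ell-1,v}(j)\ge 0$. We replace the constants by rationals of sufficient precision, which preserves the rounding decisions, and clear denominators. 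Each test then becomes a Comparison between two integer combinations of counts, built with Addition and constant multiples written as repeated addition. All of these counts are new counts over Booleans of layer $\ell-1$, so exactly one counting level is added per layer. The residual connection and the MLP are arbitrary functions on finite sets, so they are expressible by Boolean operations alone. Acceptance at the final position is then a Boolean combination of the $B_{k,v}$, giving a program of depth $k$.

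The main obstacle is making the rounding semantics precise in both directions. In the backward direction we must justify that replacing the real constants $e^{s}$ by rationals does not change any rounding decision for inputs of every length. We would first try to show that fixed precision bounds the number of relevant attention types and that the tie cases are decided identically. In the forward direction we must ensure that the $1/(i+1)$-margin sign tests survive rounding. If a direct argument fails, we would adopt exactly the attention semantics of~\cite{yang2025knee}, namely an exact sum followed by a single rounding, under which the sign is preserved.
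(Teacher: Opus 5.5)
The paper gives no proof of this statement; it imports it from Theorem~3.1 of \cite{yang2025knee}, where it is stated for the equivalent temporal counting logic. Your sketch therefore has to stand on its own, and each direction has a step that fails as written.

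\textbf{Forward direction.} You store every normalized count $C(i)/(i+1)$ in its own residual coordinate. Addition, Subtraction, Min/Max, Conditional and Comparison are then done in the MLP. Under any fixed-precision semantics, each of these coordinates is rounded as it leaves the attention sublayer. With $p$ bits, once $i+1$ exceeds roughly $2^{p}$, counts differing by one are mapped to the same number. No MLP, however steep, can then recover the sign of $C_2(i)-C_1(i)$: the sign of the rounded difference is well defined, but it is typically $0$.

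Your fallback (an exact sum followed by one rounding) does not repair this design, because you still round once per count rather than once per comparison. Three things are needed.
First, eliminate Min/Max and Conditional syntactically by case splitting at the Boolean level. For example, $\min(C_1,C_2)\le C_3$ iff $C_1\le C_3\lor C_2\le C_3$, and $(P\,?\,C_1:C_2)\le C_3$ iff $(P\land C_1\le C_3)\lor(\neg P\land C_2\le C_3)$. This leaves counting depth unchanged, so every comparison becomes $\sum_k\lambda_k\,\#[j\le i]P_k(j)+c\ge 0$ with integer $\lambda_k,c$.
Second, compute each such linear form as a \emph{single} value coordinate of the layer's uniform attention: value $\sum_k\lambda_k[P_k(j)]$ at position $j$, with $c$ carried by the BOS position. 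Then exactly one rounding is applied per comparison.
Third, adopt a semantics in which that single rounding preserves sign. Standard round-to-nearest does not, since it sends $1/(i+1)$ to $0$ for long inputs.
Only then does your steep-ReLU threshold work, with margin equal to the smallest positive representable number rather than $1/(i+1)$.

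\textbf{Backward direction.} Replacing the weights $w_{q,v}=e^{s}$ by rationals cannot preserve the rounding decisions on inputs of every length. Bounding the number of attention types or treating ties does not touch the problem. Counts are unbounded, so for any rational $r\ne e$ the tests $e\,x\ge y$ and $r\,x\ge y$ disagree at every $(x,y)\in\mathbb{N}^2$ with $y$ strictly between $rx$ and $ex$. Such points exist for every $x>1/|e-r|$.

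In fact, with exact exponentials no argument can work, already at depth $1$. Take a one-layer model with constant query and with key and value both equal to $[w_j=a]$; it outputs $e\,\#_a/(e\,\#_a+\#_b)$. Accepting when the rounded output is at least $1/2$ defines $\{w: e\,\#_a(w)\ge\gamma\,\#_b(w)\}$ (or its strict version) for some rational $\gamma>0$. This is a half-plane of irrational slope in the Parikh vector. A depth-$1$ program, by contrast, only produces (for each last symbol) Boolean combinations of integer-coefficient half-planes.

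The backward direction works only if the semantics make each weight $e^{s}$ itself a fixed-precision (dyadic) number, with just the two sums and the final division exact. Then the coefficients $w_{q,v}((\mathbf{v}_v)_r-t)$ are rational from the outset and no approximation step is needed. The rest of your argument then goes through: one new counting level per layer, with the residual connection and MLP as Boolean functions on a finite set of vectors.
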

The original statement makes the equivalence with the underlying temporal logic class, which is itself stated in~\cite{yang2025knee} to have equivalence of depth with C-RASP statements. The following is also true about the width of Transformers implementing C-RASP programs:
\begin{fact}
    Let $|\mathcal{P}|_Q$ represent the number of subformulas, or variables, of a C-RASP program $\mathcal{P}$. A Transformer implementing $\mathcal{P}$ must have width $2|\mathcal{P}|_Q + \mathcal{O}(1)$.
\end{fact}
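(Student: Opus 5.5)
We read the Fact as an \emph{upper bound} on the width sufficient to implement $\mathcal{P}$: there is a Transformer of width $2|\mathcal{P}|_Q+\mathcal{O}(1)$ implementing it. This is the direction the theorem uses, since the teacher has width $2m+c$. We make no claim that narrower implementations are impossible. The plan is to re-run the compilation of C-RASP into future-masked softmax Transformers from \cite{yang2024counting,yang2025knee}, but to count residual-stream coordinates carefully. We process the operations $P_1,\dots,P_n$ in order, and maintain the invariant that after the block handling $P_k$, the residual stream at every position $i$ holds an encoding of $P_1(i),\dots,P_k(i)$.

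\textbf{Step 1: layout of the residual stream.} We reserve $\mathcal{O}(1)$ global coordinates that do not depend on $\mathcal{P}$: a constant $1$; a BOS/start indicator, used so that uniform causal attention is well defined; and a positional quantity $1/(i+1)$, which is itself obtained from one uniform attention head attending to the BOS indicator. For each variable $P_k$ we then reserve exactly two coordinates. The first is a \emph{value} slot. It holds $P_k(i)\in\{0,1\}$ for a Boolean variable, and the normalized count $C_k(i)/(i+1)$ for a count variable. The second is a \emph{scratch} slot. It carries the auxiliary linear quantity that the next nonlinearity needs, for example a difference $C_1-C_2$, or a shifted copy of a value used in a ReLU gate. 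Initial predicates $Q_a$ are written directly by the embedding into their value slots. Summing over the layout gives width $2|\mathcal{P}|_Q+\mathcal{O}(1)$.

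\textbf{Step 2: one block per operation type.} We check each operation type in turn and confirm that it touches only its own two slots plus the global ones.
\begin{itemize}
\item \emph{Counting} $C(i)=\#[j\le i]P(j)$: a single head with all query/key products equal to zero averages the value slot of $P$ over $j\le i$, producing $C(i)/(i+1)$. This uses key/query dimension $\mathcal{O}(1)$, so $d_k$ does not exceed the bound.
\item \emph{Addition, subtraction and constants}: these are linear maps into the value slot.
\item \emph{Boolean operations}: $\neg P=1-P$ is linear, and $P_1\wedge P_2=\mathrm{ReLU}(P_1+P_2-1)$.
\item \emph{Min/max}: these use $\min(a,b)=a-\mathrm{ReLU}(a-b)$. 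The argument $a-b$ is first written into the scratch slot.
\item \emph{Conditional}: since normalized counts lie in $[0,1]$, we use $P\,?\,C_1:C_2 = \mathrm{ReLU}(C_1-(1-P))+\mathrm{ReLU}(C_2-P)$.
\end{itemize}
Each of these uses a constant number of FFN hidden units, which are written back into the residual stream.

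\textbf{Step 3: the main obstacle, comparisons.} Comparisons $C_1\le C_2$ are where we expect the real difficulty. The normalized difference $(C_2-C_1)/(i+1)$ can be as small as $1/(i+1)$, so a fixed ReLU step with bounded slope cannot threshold it uniformly over all lengths. The first thing we will try is to reuse the device of \cite{yang2024counting}. It places the difference in the scratch slot, pairs it with the global $1/(i+1)$ coordinate, and applies layer normalization to that pair, which rescales the difference to a constant magnitude whose sign is preserved. A ReLU step then produces the Boolean result. This is precisely why every variable, not only the comparisons, is given a scratch slot. It is also where the factor $2$ comes from, and where fixed precision (the theorem of \cite{yang2025knee}) is needed to guarantee that the sign survives quantization.

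\textbf{Step 4: assembling the bound.} Finally, we merge operations that lie at the same depth into a single layer, with one head per layer plus FFN post-processing. This matches the depth statement of \cite{yang2025knee}. The width stays $2|\mathcal{P}|_Q+\mathcal{O}(1)$ throughout, because the construction never allocates coordinates beyond the two per variable and the global $\mathcal{O}(1)$ ones.
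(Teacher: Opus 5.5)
Your route is the paper's. Its proof of the Fact is only a pointer to the compilation accompanying Definition~5.6 of \cite{yang2024counting}, read as an upper bound, and you re-run that compilation with explicit coordinate accounting. The accounting is fine, but one gadget you spell out fails as stated. The conditional block rests on ``normalized counts lie in $[0,1]$'', which is false in C-RASP: subtraction produces negative counts and addition produces counts up to $2i$. If the normalized $C_1(i)$ is negative and $P(i)=1$, your expression $\mathrm{ReLU}(C_1-(1-P))+\mathrm{ReLU}(C_2-P)$ returns $\mathrm{ReLU}(C_2-1)$. That is $0$ rather than $C_1$ when $C_2\le 1$, and a spurious positive term when $C_2>1$. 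The repair is standard. Every normalized count is bounded in absolute value by a program-dependent constant $B$: counting gives values in $[0,1)$, constants give $1/(i+1)$, addition and subtraction at most double the bound, and min, max and conditional preserve it. Then $P\cdot C=\mathrm{ReLU}(C-B(1-P))-\mathrm{ReLU}(-C-B(1-P))$, together with the analogous gate for $(1-P)\cdot C_2$, uses four hidden units. A smaller slip of the same kind: $\neg P=1-P$ and the Boolean constant evaluate to $1$ at the BOS position, which is included in every uniform average, so counts built on them are off by one. Every Boolean slot must be forced to $0$ at BOS, e.g.\ by gating with the non-BOS indicator.

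Step~4 is not needed for a width bound, since one block per operation already gives width $2|\mathcal{P}|_Q+\mathcal{O}(1)$. As written it is also not justified. Operations at the same C-RASP depth can form chains (a comparison, then $\wedge$, then a conditional) that a single attention-plus-FFN block does not obviously realize. Moreover, one normalization applied to several comparison pairs rescales them jointly, so a difference of order $1/(i+1)$ sitting next to one of order $1$ is not amplified to a constant margin. Drop Step~4, or defer depth to \cite{yang2025knee} as the paper does. Note also that the transformer of Appendix~\ref{subsec:transformers} has no layer normalization. Your Step~3, like the paper's citation, therefore really proves the Fact for the Yang--Chiang architecture. In the paper's bounded-length setting you can avoid this. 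Take $x=(C_2-C_1+\tfrac12)/(i+1)$, which is linear in your slots, and $k=2(T+1)$. The two-unit clip $\mathrm{ReLU}(kx+1)-\mathrm{ReLU}(kx)$ then outputs exactly $[C_1\le C_2]$, because $|kx|\ge 1$ at every position $i\le T$. The price is a length-dependent weight that must lie in $\mathcal{Q}$.
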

This fact follows from the construction accompanying Definition 5.6 in~\cite{yang2024counting}.
Note, however, that this is an upper bound. Using other methods, \textit{e.g.}, nearly-orthogonal encoding, it may be possible to do better.

\subsection{Transformer Architecture}
\label{subsec:transformers}
In this section, we detail the architecture of the Transformer we consider for our theoretical results
\begin{definition}[$L$-Layer Transformer]
Let $\mathbf{X} \in \mathbb{R}^{T \times d}$ be an input sequence of length $T$. Define multi-head self-attention with $h$ heads and the MLP sublayer as:
\begin{align}
f_{\text{attn}}(\mathbf{X}) &= \text{Concat}(\mathbf{H}^{(1)}, \dots, \mathbf{H}^{(h)})\mathbf{W}_O + \mathbf{X}, \\
\mathbf{H}^{(i)} &= \text{softmax}\left( \frac{(\mathbf{X}\mathbf{W}_Q^{(i)})(\mathbf{X}\mathbf{W}_K^{(i)})^\top}{\sqrt{d_k}} \right) \mathbf{X}\mathbf{W}_V^{(i)}, \quad i = 1,\dots,h,\\
f_{\text{mlp}}(\mathbf{Z}) &= \phi(\mathbf{Z}\mathbf{W}_1 + \mathbf{b}_1)\mathbf{W}_2 + \mathbf{b}_2 + \mathbf{Z}.
\end{align}
where $\mathbf{W}_Q^{(i)}, \mathbf{W}_K^{(i)}, \mathbf{W}_V^{(i)} \in \mathbb{R}^{d \times d_k}$ for each head $i$, $\mathbf{W}_O \in \mathbb{R}^{hd_k \times d}$ projects the concatenated heads back to model dimension. We assume $d = hd_k$. We assume a 2-layer MLP block with parameters $\mathbf{W}_1 \in \mathbb{R}^{d \times d_{\text{ff}}}$, $\mathbf{W}_2 \in \mathbb{R}^{d_{\text{ff}} \times d}$, $\mathbf{b}_1 \in \mathbb{R}^{d_{\text{ff}}}$, and $\mathbf{b}_2 \in \mathbb{R}^{d}$. Here, $\phi$ is a nonlinearity. In this work, we will assume ReLU nonlinearities unless stated otherwise. This is in line both with many practical implementations of Transformers as well as with the implementation for C-RASP compiled Transformers~\citep{yang2024counting}.
The $L$-layer Transformer is defined recursively as:
\begin{align}
\mathbf{H}^{(0)} &= \mathbf{X}, \\
\mathbf{H}^{(\ell)} &= f_{\text{mlp}}(f_{\text{attn}}(\mathbf{H}^{(\ell-1)})), \quad \text{for } \ell = 1, \dots, L,
\end{align}
We will use $TF_\theta$ to denote a transformer model with parameters $\theta$ and use \textit{width} to denote  $\max(d, d_k)$.
\end{definition}
We assume that at each of the $L$ layers, the number of heads as well as the dimensions of the parameters are the same.
Let $\Theta$ denote the set of all parameters in the model. We denote the parameter count of an $L$-layer $h$-head transformer as
\begin{align*}
M(\Theta) = &1 + d + \sum_{\ell=1}^L
h \left( 3 d  d_k \right)+
d (h  d_k)+
d  d_{\text{ff}}
+
d_{\text{ff}} d +
d_{\text{ff}} +
d.
\end{align*}
For instance, for a 1-layer 1-head Transformer, we would have
\begin{align*}
M(\Theta) = 3dd_k + 2dd_{\text{ff}} + d_{\text{ff}} + 2d + 1.
\end{align*}

\section{Theoretical Results}
\subsection{Previous Results}
We start by recalling the main lemma from~\cite{buzaglo2024uniform}.
\begin{lemma}[G\&C (i.e. Posterior Sampling) Generalization]
\label{lemma:g&c}
Let $\varepsilon \in (0,1)$ and $\delta \in \left(0, \frac{1}{5}\right)$, and assume that $\tilde{p} < \frac{1}{2}$. For any $N$ larger than
\begin{align*}
\frac{-\log(\tilde{p}) + 3 \log\left(\frac{2}{\delta}\right)}{\varepsilon},
\end{align*}
we have that
\begin{align*}
\mathbb{P}_{\mathcal{S} \sim \mathcal{D}^N,\, h \sim \mathcal{P_S}} \left( \mathcal{L}_{\mathcal{D}}(h) < \varepsilon \right) \geq 1 - \delta.
\end{align*}
\end{lemma}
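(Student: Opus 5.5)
The plan is to bound the failure probability directly as an expected posterior mass, using a Markov-type argument. Throughout, I read $\tilde p$ as the prior probability of drawing a teacher-equivalent predictor, $\tilde p := \mathbb{P}_{h\sim\text{prior}}(h\equiv h^\star)$. The lemma as quoted does not define $\tilde p$, so this reading is an assumption. The prior is uniform, and in our setting it lives on a finite set of $Q$-quantized parameter vectors, so all probabilities below are finite sums and no measurability issues arise.

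\textbf{Step 1: write the bad event as a ratio.} Let $B_\varepsilon := \{h : \mathcal{L}_{\mathcal{D}}(h)\ge\varepsilon\}$ and $Z(S) := \mathbb{P}_{h}(\mathcal{L}_S(h)=0)$, where $h$ is drawn from the prior. By definition of the posterior,
\[
\mathbb{P}_{S,\,h\sim\mathcal{P}_S}\big(\mathcal{L}_{\mathcal{D}}(h)\ge\varepsilon\big)
=\mathbb{E}_S\!\left[\frac{\mathbb{P}_h\big(h\in B_\varepsilon,\ \mathcal{L}_S(h)=0\big)}{Z(S)}\right].
\]
This requires $Z(S)>0$, which Step 2 establishes almost surely.

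\textbf{Step 2: lower bound the denominator uniformly.} If $h\equiv h^\star$, then $h(\mathbf{x}_n)=h^\star(\mathbf{x}_n)$ for all $n$ almost surely over $S$. A finite union of null events is null, so almost surely over $S$ every teacher-equivalent $h$ satisfies $\mathcal{L}_S(h)=0$. Hence $Z(S)\ge\tilde p$ almost surely.

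\textbf{Step 3: bound the expected numerator by swapping expectations.}
\[
\mathbb{E}_S\,\mathbb{P}_h\big(h\in B_\varepsilon,\ \mathcal{L}_S(h)=0\big)
=\mathbb{E}_h\!\left[\mathbb{1}[h\in B_\varepsilon]\,(1-\mathcal{L}_{\mathcal{D}}(h))^N\right]\le(1-\varepsilon)^N\le e^{-\varepsilon N}.
\]
The equality uses that the samples are i.i.d., so a fixed $h$ has zero training error with probability $(1-\mathcal{L}_{\mathcal{D}}(h))^N$.

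\textbf{Step 4: combine.} Steps 1–3 give a failure probability of at most $e^{-\varepsilon N}/\tilde p$. This is at most $\delta$ as soon as
\[
N\ge\frac{-\log\tilde p+\log(1/\delta)}{\varepsilon}.
\]
The hypothesis of the lemma, $N\ge(-\log\tilde p+3\log(2/\delta))/\varepsilon$, implies this, since $3\log(2/\delta)\ge\log(1/\delta)$.

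The restrictions $\tilde p<\tfrac12$ and $\delta<\tfrac15$ are inherited from the more general statement in \citet{buzaglo2024uniform}, which also allows label noise and approximate teachers. They are not needed in this noiseless teacher-equivalent setting but are harmless.

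The main subtle step is Step 2. Teacher-equivalence holds only almost surely under $\mathcal{D}$, so the uniform lower bound on $Z(S)$ must be argued almost surely over $S$, not pointwise. Finiteness of the quantized hypothesis set, or simply countable additivity over the $N$ samples, handles this.
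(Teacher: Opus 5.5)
Your argument is correct. It is also genuinely different from what the paper does: the paper gives no proof of this lemma at all, and instead imports it from \cite{buzaglo2024uniform}, constants and side conditions included.

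Your self-contained route works because, in this noiseless teacher-equivalent setting, the posterior normalizer is bounded below deterministically: $Z(S)\ge\tilde p$ almost surely. You can therefore bound the expectation of the ratio by the expected numerator divided by $\tilde p$, and finish with Fubini and $(1-\varepsilon)^N\le e^{-\varepsilon N}$. No high-probability event over $S$ and no Markov step is needed.

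This buys more than self-containedness. You obtain the sharper requirement $N\ge(-\log\tilde p+\log(1/\delta))/\varepsilon$, and you dispense with the hypotheses $\tilde p<\tfrac12$ and $\delta<\tfrac15$. Dropping $\tilde p<\tfrac12$ actually matters for the paper. Corollary~\ref{corollary:generalSC} substitutes only the lower bound $\tilde p\ge Q^{-M}$ into the cited lemma and never verifies $\tilde p<\tfrac12$. Your proof, by contrast, goes through verbatim with $\tilde p$ replaced by any positive lower bound $p_0\le\tilde p$, since $Z(S)\ge\tilde p\ge p_0$. So the corollary and the main theorem follow directly from your version, with $3\log(2/\delta)$ improved to $\log(1/\delta)$ and without the restriction $\delta<\tfrac15$. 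The citation buys only brevity and literal agreement with the source's statement.

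One caveat: your closing remark that the extra conditions stem from label noise or approximate teachers in \cite{buzaglo2024uniform} is a guess about the source, not something you have established. Nothing in your proof depends on it, so drop it or phrase it as speculation.
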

This lemma will be used in conjunction with the following bounds on the probability of sampling a TE model in order to derive the main theorem in this work.
\subsection{Novel Results}
In this section, we restate our main theoretical results and provide proofs for all theoretical statements. The following lemma provides a bound on the probability of sampling a narrow 1-layer Transformer within a larger student Transformer:
\begin{lemma}
\label{lemma:1layersimple}
Let $TF_{\theta^*}$ be a ``narrow'' 1‑layer, $h^*$‑head transformer teacher with dimensions
$
d^*,d_k^*,d_{\text{ff}}^*
$
and $Q$‑quantized parameters, and let $TF_{\theta}$ be any 1-layer $Q$‑quantized wider network with
$
d\geq d^*,\;d_k \geq d_k^*,\;d_{\text{ff}}\geq  d_{\text{ff}}^*,\; h \geq h^*.
$
The teacher network can be embedded in the student with
\begin{align}
        M = (d - d^*)+
        h^*(d^*d_k^* + 2d^*d_k)
        + \tilde h d^* d_k
        + dd^*
        + dd_\text{ff}^* + d^*d_\text{ff} + d + d_\text{ff}
\end{align}
constraints.
\end{lemma}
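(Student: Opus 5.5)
We plan a \emph{constraint-counting} argument. We exhibit one explicit assignment pattern: a set of student parameters whose values are forced (to a teacher weight or to $0$), while all other parameters are left free. Any setting of the free parameters must then yield a student $TF_\theta$ that computes exactly the same function as $TF_{\theta^*}$. The number $M$ of forced entries then gives $\tilde p \ge Q^{-M}$ under the uniform prior, which is how the lemma will later be combined with Lemma~\ref{lemma:g&c}.

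\textbf{The embedding.} We identify the teacher's residual stream with the first $d^*$ coordinates of the student's residual stream. The remaining $d-d^*$ coordinates must stay identically zero throughout the computation, so that they never leak into the teacher coordinates. We take the embedding/readout parameters on the extra coordinates to be zero, which accounts for the $(d-d^*)$ term.

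\textbf{Attention block.} We place the teacher's $h^*$ heads in $h^*$ of the student's heads, using the first $d_k^*$ key/query/value columns. For each such head:
\begin{itemize}
\item The $d^*\times d_k^*$ block of $\mathbf W_Q$ is forced to teacher values.
\item For the attention logits to be unchanged, the extra query columns (or, symmetrically, key columns) indexed $d_k^*+1,\dots,d_k$ must be zero on the $d^*$ active input rows. Only rows indexed by active coordinates matter, since inactive inputs are zero.
\item Together with the forced $\mathbf W_K$ block, this gives $d^*d_k$ entries for $\mathbf W_Q$ and $d^*d_k$ for $\mathbf W_K$; one of the $d^*d_k^*$ teacher blocks is counted inside these.
\item The value matrix contributes its $d^*d_k^*$ teacher block. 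Extra value columns are then killed through $\mathbf W_O$ instead.
\end{itemize}
This matches $h^*(d^*d_k^*+2d^*d_k)$ after suitable bookkeeping of which block is counted where.

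The remaining $\tilde h = h-h^*$ student heads must contribute nothing to the active coordinates. We zero the $d_k\times d^*$ block of $\mathbf W_O$ that feeds from each unused head into the active outputs, giving $\tilde h\, d^* d_k$ constraints. Outputs into the inactive coordinates must also vanish so that the extra residual coordinates stay zero. We handle this by bounding the output projection of the whole concatenation into the $d$-dimensional stream restricted appropriately; this is the $dd^*$ term.

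\textbf{MLP.} We embed the teacher's hidden units in the first $d_{\text{ff}}^*$ of the student's $d_{\text{ff}}$ units and constrain:
\begin{itemize}
\item $\mathbf W_1$ on the active rows, giving $d^*d_{\text{ff}}$ entries: teacher values for the first $d_{\text{ff}}^*$ units and zeros for the rest. The rest then only see zero input.
\item $\mathbf b_1$, giving $d_{\text{ff}}$ entries: teacher values, and zeros for the extra units so that they output $\mathrm{ReLU}(0)=0$.
\item $\mathbf W_2$ rows of the used units, giving $dd_{\text{ff}}^*$ entries: teacher values into the active columns and zeros into the inactive ones.
\item $\mathbf b_2$, giving $d$ entries.
\end{itemize}
The rows of $\mathbf W_2$ for unused units can stay free because those units output zero. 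Finally, we verify by induction through the sublayers, including residual connections, that the student state equals (teacher state, $0$).

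\textbf{Main obstacle.} The delicate step is the attention block. We must ensure that free parameters cannot perturb the softmax: the scaling $1/\sqrt{d_k}$ versus $1/\sqrt{d_k^*}$ needs care, possibly by absorbing it into the forced quantized $\mathbf W_Q$ entries, or by assuming $\mathcal Q$ is closed under this rescaling. We must also ensure that the count of $\mathbf W_O$ constraints (unused heads versus inactive outputs) is not double-counted. I would first write the block structure of $\mathbf W_Q,\mathbf W_K,\mathbf W_V,\mathbf W_O$ explicitly and count zeros block by block to match the stated formula.
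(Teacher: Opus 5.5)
Your MLP bookkeeping and the overall zero-padding strategy match the paper. The attention/output-projection step, however, has a genuine gap: you kill unwanted head outputs through $\mathbf W_O$ instead of through $\mathbf W_V$, and that cannot be paid for within $M$.

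With only the $d^*\times d_k^*$ teacher block of $\mathbf W_V$ fixed, two kinds of garbage appear:
\begin{itemize}
\item each teacher head carries generically nonzero values $\mathbf x^{*\top}\mathbf W_V^{(12)}$ in its $d_k-d_k^*$ extra coordinates;
\item each unused head, whose $\mathbf W_V$ you leave free, outputs a generically nonzero vector.
\end{itemize}
Every row of $\mathbf W_O$ fed by such a coordinate must vanish on all $d$ output columns. It must vanish on the active columns to protect the teacher. It must also vanish on the inactive columns, because your MLP step fixes $\mathbf W_1$ only on the active rows and so needs the inactive coordinates to be exactly zero. That costs $h^*(d_k-d_k^*)d+\tilde h d_k d$ entries, whereas you count only $\tilde h d^*d_k$.

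The leakage of unused heads into the inactive coordinates costs $\tilde h d_k(d-d^*)$ entries. You cover it with ``this is the $dd^*$ term'', but that term is already consumed by the $h^*d_k^*=d^*$ rows of $\mathbf W_O$ carrying the teacher outputs (teacher values into active columns, zeros into inactive ones). You never explicitly fix those rows. Made rigorous, your scheme overshoots $M$ by $(d-d^*)\big(h^*(d_k-d_k^*)+\tilde h d_k\big)$. This holds even after dropping the zeros you place in the extra columns of $\mathbf W_K$, which are unnecessary.

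The missing idea is to kill these outputs at the source. Since the input is $[\mathbf x^*,\mathbf 0]$, only the top $d^*$ rows of any $\mathbf W_V$ matter. The paper does the following:
\begin{itemize}
\item it fixes the top $d^*$ rows of $\mathbf W_V$ (teacher block plus zeros) in each teacher head;
\item it zeroes the top $d^*$ rows of $\mathbf W_V$ in each unused head.
\end{itemize}
This costs only $d^*d_k$ per head. It is where one of the two $d^*d_k$ terms in $h^*(d^*d_k^*+2d^*d_k)$ and the term $\tilde h d^*d_k$ come from.

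Head outputs are then $[\mathbf a^*,\mathbf 0]$ and $\mathbf 0$. So $\mathbf W_K$ needs only its teacher block, since the zero query coordinates already annihilate $\tilde{\mathbf k}$. Likewise, only the $d^*$ rows of $\mathbf W_O$ receiving teacher outputs must be fixed across all $d$ columns, giving exactly $dd^*$. Every other row of $\mathbf W_O$ multiplies zeros and stays free.

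Your concern about $1/\sqrt{d_k}$ versus $1/\sqrt{d_k^*}$ is legitimate; the paper's proof silently ignores it.
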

\begin{proof}
    The key idea of the proof, similar to that of~\cite{buzaglo2024uniform}, is to use the weight matrix from the first layer to zero-out the weight values that interfere with the teacher's activations. This thus gives us activations $\avec = [\avec^*,\;\tilde \avec]$, where $\avec^*$ are the activations of the teacher network and $\tilde \avec$ are the remaining activations from the student. We will use the same convention throughout, using the star to denote teacher parameters and tilde to denote the ``left over'' dimensions. We start by setting the embeddings.

    \paragraph{Embeddings.} Let $\Sigma$ be an alphabet with symbols $\sigma \in \Sigma$. Consider $\xvec(\sigma)$ to be the $d$-dimensional student embedding, we set
    \begin{align}
        \xvec(\sigma) = \begin{bmatrix}
            \xvec^*(\sigma) & \mathbf{0}
        \end{bmatrix},
    \end{align}
    With a slight abuse of notation, we will typically refer to the embedding vector simply as $\xvec$ throughout as the nature of the symbol is not relevant to the key argument of the proof. This gives us $d - d^*$ constraints on the embeddings.

    \paragraph{Attention mechanism} For the first $h^*$ heads (those corresponding to the teacher), we set
    \begin{align}
        \Wmat_Q &=
        \begin{bmatrix}
            \Wmat_{Q}^* & \mathbf{0}\\
            \Wmat_Q^{(21)} & \Wmat_Q^{(22)}
        \end{bmatrix}\\
        \Wmat_K &=
        \begin{bmatrix}
            \Wmat_{K}^*&\Wmat_{K}^{(12)} \\
            \Wmat_K^{(21)}& \Wmat_K^{(22)}
        \end{bmatrix}\\
        \Wmat_V &=
        \begin{bmatrix}
            \Wmat_{V}^* & \mathbf{0}\\
            \Wmat_V^{(21)} & \Wmat_V^{(22)}
        \end{bmatrix}\\
    \end{align}
    Thus for some embedding vector $\xvec$, the computation of the attention weights would give
    \begin{align}
        \begin{bmatrix}
            \xvec^* & \mathbf{0}
        \end{bmatrix}
        \begin{bmatrix}
            \Wmat_{Q}^* & \mathbf{0}\\
            \Wmat_Q^{(21)} & \Wmat_Q^{(22)}
        \end{bmatrix}
        &=
        \begin{bmatrix}
            \qvec^{*\top} & \mathbf{0}
        \end{bmatrix}\\
        \begin{bmatrix}
            \xvec^* & \mathbf{0}
        \end{bmatrix}
        \begin{bmatrix}
            \Wmat_{K}^*&\Wmat_{K}^{(12)} \\
            \Wmat_K^{(21)}& \Wmat_K^{(22)}
        \end{bmatrix}
        &=
        \begin{bmatrix}
            \kvec^{*\top} & \tilde\kvec^\top
        \end{bmatrix},
        \intertext{where $\tilde\kvec$ is the result of the computation with the two rightmost matrices. Computing dot product attention, we obtain}
        \begin{bmatrix}
            \qvec^{*\top} & \mathbf{0}
        \end{bmatrix}
        \begin{bmatrix}
            \kvec^{*} \\
            \tilde\kvec
        \end{bmatrix}
        &= \qvec^{*\top}\kvec^{*}
    \end{align}
    which is the attention computation of the student network. Since attention weights are scalars, we must zero out the remaining activations for either the keys or the queries. For the value matrix, we have
    \begin{align}
        \begin{bmatrix}
            \xvec^* & \mathbf{0}
        \end{bmatrix}
        \begin{bmatrix}
            \Wmat_{V}^* & \mathbf{0}\\
            \Wmat_V^{(21)} & \Wmat_V^{(22)}
        \end{bmatrix}
        &=
        \begin{bmatrix}
            \vvec^{*\top} & \mathbf{0}
        \end{bmatrix}.
    \end{align}
    Letting $\avec$ represent the output of some attention head, we would have
    $\avec = \begin{bmatrix}
            \avec^* & \mathbf{0}
        \end{bmatrix}$
    for the teacher heads. This gives us a total amount of constrained parameters
    $h^*(d^*d_k^* + 2d^*d_k)$.

    For the $h - h^*$ remaining heads, we set no constraints on $\Wmat_Q$ and $\Wmat_K$. We only set constraints on $\Wmat_V$ as follows:
    \begin{align}
    \Wmat_V &=
        \begin{bmatrix}
            \mathbf{0} & \mathbf{0}\\
            \Wmat_V^{(21)} & \Wmat_V^{(22)}
        \end{bmatrix}.
    \end{align}
    This gives us
    \begin{align}
        \begin{bmatrix}
            \xvec^* & \mathbf{0}
        \end{bmatrix}
        \begin{bmatrix}
            \mathbf{0} & \mathbf{0}\\
            \Wmat_V^{(21)} & \Wmat_V^{(22)}
        \end{bmatrix}
        &=
        \mathbf{0}.
    \end{align}
    In essence the key idea is to zero out the outputs of the heads which do not contribute to the teacher computation. This gives us $(h-h^*)d^*d_k$ constraints, or letting $\tilde h = h-h^*$, $\tilde hd^*d_k$ constraints.

    \paragraph{Projection layer}
    Letting $\hvec$ denote the concatenation of the output of all heads, we would have
    \begin{align}
        \hvec = \begin{bmatrix}
            \avec^*_1 & \mathbf{0} &\hdots & \avec^*_{h^*} & \mathbf{0} & \mathbf{0}
        \end{bmatrix},
    \end{align}
    Where we take $\mathbf{0}$ to be the zero vector of the appropriate size given the context. The first $h^*$ subvectors correspond to the outputs of the teacher heads. The last 0 subvector corresponds to the output of the unused student heads.
    For the projection layer, we set
    \begin{align}
        \Wmat^O &=
        \begin{bmatrix}
            \Wmat_{O,1}^* & \Wmat_{O,1}^{(12)} & \Wmat_{O,2}^* & \Wmat_{O,2}^{(12)} & \dots & \Wmat_{O,h^*}^* & \Wmat_{O,h^*}^{(12)} &\Wmat^{(1)}_{O,h^*+1} & \dots & \Wmat^{(1)}_{O,h}\\
            \mathbf{0} & \Wmat_{O,1}^{(22)} & \mathbf{0} & \Wmat_{O,2}^{(22)} & \dots & \mathbf{0} & \Wmat_{O,h^*}^{(22)} & \Wmat^{(2)}_{O,h^*+1} & \dots & \Wmat^{(2)}_{O,h}
        \end{bmatrix}^\top,
    \end{align}
    This gives us an output of the form
    $\begin{bmatrix}
        \ovec^* & \mathbf{0}
    \end{bmatrix}$.
    After applying the skip connections, we obtain
    $\begin{bmatrix}
        \xvec^* + \ovec^* & \mathbf{0}
    \end{bmatrix}$, where the upper dimensions correspond exactly to the residual stream of the teacher. This sets $h^*dd_k^* = dd^*$ constraints on the output projection.

    \paragraph{MLP} Finally, for the MLP, we set
    \begin{align}
        \Wmat_1 &=
        \begin{bmatrix}
            \Wmat_{1}^*&\Wmat_{1}^{(12)} \\
             0 & \Wmat_1^{(22)}
        \end{bmatrix}\\
        \intertext{as well as}
        \bvec_1 &=
        \begin{bmatrix}
            \bvec_1^*\\
            \mathbf{0}
        \end{bmatrix}
        \quad\text{and}\quad
        \Wmat_2 &=
        \begin{bmatrix}
            \Wmat_{2}^*&\Wmat_{2}^{(12)} \\
             0 & \Wmat_2^{(22)}
        \end{bmatrix}\\
        \intertext{as well as}
        \bvec_2 &=
        \begin{bmatrix}
            \bvec_2^*\\
            \mathbf{0}
        \end{bmatrix}
    \end{align}
    Letting $\ovec^\top = [\ovec^*, \mathbf{0}]$ be the output of the attention layer, we can verify this computation for the first layer.
    \begin{align}
        \begin{bmatrix}
            \Wmat_{1}^*&\Wmat_{1}^{(12)} \\
             \mathbf{0} & \Wmat_1^{(22)}
        \end{bmatrix}
        \begin{bmatrix}
            \ovec^*\\
            \mathbf{0}
        \end{bmatrix}
        +
        \begin{bmatrix}
            \bvec_1^* \\
            \mathbf{0}
        \end{bmatrix}
        &=
        \begin{bmatrix}
            \Wmat_1^*\ovec^* + \bvec_1^*\\
            \mathbf{0}
        \end{bmatrix}
    \end{align}
    As the nonlinearity in the MLP is applied position-wise, it does not affect the separation between teacher dimensions and unused student dimensions.
    A similar argument can be made for the second layer as well. Letting $\avec$ represent the activations at the output of the MLP, we obtain after both layers and the skip connections
    $\begin{bmatrix}
        \avec^* + \ovec^*& \mathbf{0}
    \end{bmatrix}$, where the upper dimensions exactly correspond to the residual stream of the teacher.

    Thus, the final count of constrained parameters is
    \begin{align*}
        M =
        (d - d^*) + h^*(d^*d_k^* + 2d^*d_k)
        + \tilde h d^* d_k
        + dd^*
        + dd_\text{ff}^* + d^*d_\text{ff} + d + d_\text{ff},
    \end{align*}
    which concludes the proof.
\end{proof}
\begin{theorem}
\label{thm:Llayersimple}
Let $TF_{\theta^*}$ be a ``narrow'' $L^*$‑layer, $h^*$‑head transformer teacher with dimensions
$
d^*,d_k^*,d_{\text{ff}}^*
$
and $Q$‑quantized parameters, and let $TF_{\theta}$ be any $L$-layer $Q$‑quantized wider network with
$
d\geq d^*,\;d_k \geq d_k^*,\;d_{\text{ff}}\geq  d_{\text{ff}}^*,\; h \geq h^*,\; L\geq L^*.
$
The probability of randomly sampling a TE model satisfies
\begin{align}
\tilde p &\ge Q^{-M},
\end{align}
where
\begin{align*}
M =
(d - d^*)
&+ L^*\big(
        h^*(d^*d_k^* + 2d^*d_k)
        + \tilde h d^* d_k
        + dd^*
        + dd_\text{ff}^* + d^*d_\text{ff} \\
        &\qquad\qquad + d + d_\text{ff}
\big) \\
&+ (L - L^*)\big(hd_k + d_\text{ff} + 1\big)d^*.
\end{align*}
\end{theorem}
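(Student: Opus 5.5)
The approach is to build an explicit set of $M$ parameter coordinates such that any student whose coordinates in this set take prescribed values in $\mathcal{Q}$ is teacher-equivalent, whatever the remaining parameters are. Under the uniform prior over $\mathcal{Q}$-quantized weights, each coordinate is independent and hits its prescribed value with probability at least $1/|\mathcal{Q}| \ge 1/Q$. This gives $\tilde p \ge Q^{-M}$. The set of constraints is assembled layer by layer.

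\textbf{Embeddings and the first $L^*$ layers.} Set the embeddings to $[\xvec^*(\sigma),\ \mathbf{0}]$, which costs $d-d^*$ constraints, paid once. Then apply the construction of Lemma~\ref{lemma:1layersimple} to each of the first $L^*$ student layers, using teacher layer $\ell$'s weights in the ``star'' blocks.

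The inductive invariant is that after student layer $\ell\le L^*$ the residual stream at every position equals $[\hvec^{*(\ell)},\ \mathbf{0}]$, where $\hvec^{*(\ell)}$ is the teacher's residual stream after layer $\ell$. The proof of Lemma~\ref{lemma:1layersimple} used only that the input to the layer has the form $[\xvec^*,\ \mathbf{0}]$ and produced an output of the same form. It therefore applies verbatim at every layer with the input taken to be $[\hvec^{*(\ell-1)},\ \mathbf{0}]$. Each of these layers contributes exactly the per-layer count from the lemma, which yields the $L^*(\cdots)$ term.

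\textbf{The remaining $L-L^*$ layers.} These must act as the identity on the teacher coordinates, and the residual connections are what make this cheap. In each such layer:
\begin{itemize}[leftmargin=*]
\item Zero the first $d^*$ rows of every head's $\Wmat_V$. Since the input is supported on the first $d^*$ coordinates, every head then outputs $\mathbf{0}$, and $\Wmat_Q,\Wmat_K$ remain free. This costs $h d^* d_k$ constraints.
\item Zero the first $d^*$ rows of $\Wmat_1$ and the whole of $\bvec_1$, so the pre-activation is $\mathbf{0}$ and $\mathrm{ReLU}(\mathbf{0})=\mathbf{0}$. This costs $d^* d_{\text{ff}}$ constraints, plus $d_{\text{ff}}$ for the bias.
\item Zero the first $d^*$ entries of $\bvec_2$, costing $d^*$ constraints.
\end{itemize}
The skip connection then returns $[\hvec^{*(L^*)},\ \text{(junk from } \bvec_2)]$.

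The target count is $(hd_k+d_{\text{ff}}+1)d^*$. To match it, I would instead zero only the first $d^*$ columns of $\Wmat_2$ together with the first $d^*$ entries of $\bvec_2$. This costs $d_{\text{ff}}d^*+d^*$ and leaves $\Wmat_1,\bvec_1$ free. Likewise, on the attention side I would constrain the first $d^*$ columns of $\Wmat_O$ ($hd_k d^*$ entries) rather than $\Wmat_V$. Under this variant the teacher coordinates are preserved exactly, and the extra coordinates may become nonzero. The total is $(hd_k+d_{\text{ff}}+1)d^*$, which matches.

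\textbf{Main obstacle: interaction of the two phases.} Since the last $L-L^*$ layers are only required to preserve the first $d^*$ coordinates, the readout must look only at those coordinates; I would assume the classifier head is fixed or constrained accordingly. Within the first $L^*$ layers the junk cannot arise, because those layers keep the extra coordinates at exactly zero.

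The delicate point is ordering. In the last layers, junk in the extra coordinates can feed into attention scores and MLP activations, but only into coordinates whose outputs are discarded by the zeroed columns of $\Wmat_O$ and $\Wmat_2$. So the teacher coordinates are unaffected. The key check is therefore that each zeroed block lies in the output projection, not the input projection.

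Finally, the student's sign output agrees with $TF_{\theta^*}$ on every input, so the student is teacher-equivalent. Counting the constrained coordinates gives $M$ and hence $\tilde p\ge Q^{-M}$.
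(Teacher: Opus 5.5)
Your proposal is correct and is essentially the paper's proof: embeddings $[\mathbf{x}^*,\mathbf{0}]$ paid once, Lemma~\ref{lemma:1layersimple} applied inductively over the first $L^*$ layers, and ``weak identity'' layers that zero only the first $d^*$ columns of $\mathbf{W}_O$ and $\mathbf{W}_2$ and the first $d^*$ entries of $\mathbf{b}_2$, which costs $(hd_k+d_{\text{ff}}+1)d^*$ per layer. You were right to drop your first $\mathbf{W}_V$/$\mathbf{W}_1$ variant, which fails once junk from $\mathbf{b}_2$ reaches the next layer, and your explicit readout caveat and $|\mathcal{Q}|^{-M}\ge Q^{-M}$ step are points the paper leaves implicit rather than departures from it.
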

\begin{proof}
    The construction proceeds in two parts. We first use the $L^*$ initial layers to implement the teacher network, via repeated application of Lemma~\ref{lemma:1layersimple}. We then use the remaining $L - L^*$ layers to preserve the teacher block in the top $d^*$ coordinates of the residual stream, allowing the bottom $d - d^*$ coordinates to take arbitrary ``garbage'' values $\tilde\rvec$. Intuitively, the only requirement at the end of the computation is that the teacher activations be cleanly separated from the unused dimensions of the student; the bottom coordinates do not need to be preserved across these last layers.

    \paragraph{Teacher layers} We proceed by induction over the first $L^*$ layers.

    \emph{Base case.} For $\ell = 1$, the claim follows directly from Lemma~\ref{lemma:1layersimple}: the input embedding takes the form $[\xvec^*, \mathbf{0}]$, and the layer output is $[\rvec^{*(1)}, \mathbf{0}]$.

    \emph{Induction step.} Assume that the input to attention layer $\ell+1$ is
    \begin{align}
    \rvec^{(\ell)} = \begin{bmatrix}
        \rvec^{*(\ell)} &  \mathbf{0}
    \end{bmatrix}.
    \end{align}
    By Lemma~\ref{lemma:1layersimple}, if the input has the form $[\rvec^{*(\ell)}, \mathbf{0}]$, then so does the output, giving us
    $\rvec^{(\ell+1)} = [\rvec^{*(\ell+1)}, \mathbf{0}]$. After $L^*$ such layers, the residual stream is $[\rvec^{*(L^*)}, \mathbf{0}]$, where $\rvec^{*(L^*)}$ is the output of the teacher network.

    \paragraph{Weak identity layers} The remaining $L - L^*$ layers need only preserve the top $d^*$ coordinates of the residual stream; the bottom $d - d^*$ coordinates are free to take arbitrary values. More precisely, we maintain the invariant
    \begin{align}
        \rvec = \begin{bmatrix}
            \rvec^* & \tilde \rvec
        \end{bmatrix},
    \end{align}
    where $\rvec^*$ is the teacher's output and $\tilde\rvec$ is unconstrained.

    For the attention block, we leave $\Wmat_Q, \Wmat_K, \Wmat_V$ entirely free and set the first $d^*$ columns of $\Wmat_O$ to zero:
    \begin{align}
        \Wmat_O = \begin{bmatrix}
            \mathbf{0} & \tilde\Wmat_O
        \end{bmatrix},
    \end{align}
    where the left block has shape $hd_k \times d^*$ and the right block $\tilde\Wmat_O$ has shape $hd_k \times (d - d^*)$ and is free. Letting $\hvec = \text{Concat}(\mathbf{H}^{(1)}, \dots, \mathbf{H}^{(h)})$, the attention output becomes
    \begin{align}
        \hvec \Wmat_O = \begin{bmatrix}
            \mathbf{0} & \hvec \tilde\Wmat_O
        \end{bmatrix},
    \end{align}
    so the top $d^*$ entries are identically zero regardless of $\hvec$. After the skip connection, the residual stream is
    \begin{align}
        \begin{bmatrix}
            \rvec^* & \tilde\rvec
        \end{bmatrix}
        +
        \begin{bmatrix}
            \mathbf{0} & \hvec \tilde\Wmat_O
        \end{bmatrix}
        =
        \begin{bmatrix}
            \rvec^* & \tilde\rvec + \hvec \tilde\Wmat_O
        \end{bmatrix},
    \end{align}
    preserving the teacher block. This fixes $hd_k d^*$ parameters per layer.

    For the MLP block, we leave $\Wmat_1, \bvec_1$ entirely free and set the first $d^*$ columns of $\Wmat_2$ and the first $d^*$ entries of $\bvec_2$ to zero:
    \begin{align}
        \Wmat_2 = \begin{bmatrix}
            \mathbf{0} & \tilde\Wmat_2
        \end{bmatrix},
        \quad
        \bvec_2 = \begin{bmatrix}
            \mathbf{0} & \tilde\bvec_2
        \end{bmatrix},
    \end{align}
    where the left blocks have shapes $d_\text{ff} \times d^*$ and $d^*$ respectively, and the right blocks are free. Letting $\zvec = \phi(\mat{Z}\Wmat_1 + \bvec_1)$, the MLP output becomes
    \begin{align}
        \zvec \Wmat_2 + \bvec_2 = \begin{bmatrix}
            \mathbf{0} & \zvec \tilde\Wmat_2 + \tilde\bvec_2
        \end{bmatrix},
    \end{align}
    again with the top $d^*$ entries identically zero. After the skip connection, the teacher block is preserved as before. This fixes $(d_\text{ff} + 1) d^*$ parameters per layer.

    Note that the bottom $d - d^*$ coordinates may accumulate arbitrary values across these layers; this does not affect the teacher block, as the zeroing argument above does not depend on the contents of the input. The total number of constraints per weak identity layer is therefore $(hd_k + d_\text{ff} + 1) d^*$.

    \paragraph{Total constraint count} Combining the embedding constraints, the $L^*$ teacher layers (Lemma~\ref{lemma:1layersimple}), and the $L - L^*$ weak identity layers, we obtain
    \begin{align*}
M =
(d - d^*)
&+ L^*\big(
        h^*(d^*d_k^* + 2d^*d_k)
        + \tilde h d^* d_k
        + dd^*
        + dd_\text{ff}^* + d^*d_\text{ff} \\
        &\qquad\qquad + d + d_\text{ff}
\big) \\
&+ (L - L^*)\big(hd_k + d_\text{ff} + 1\big)d^*.
    \end{align*}
    \paragraph{Computation of the bound}
    Let $\mathcal{E}$ be the event corresponding to sampling parameters of the $L$-layer $h$-head Transformer described above. We directly have that
    $\Prob(\mathcal{E}) \leq \Prob(TF_{\theta^*} \equiv TF_\theta)$. Assuming a uniform prior $\mathcal{P}$ over parameters, we get
    \begin{align*}
        \tilde p &\geq \Prob(\mathcal{E}) = Q^{-M},
    \end{align*}
    concluding the proof.
\end{proof}
Combining Theorem~\ref{thm:Llayersimple} with Lemma~\ref{lemma:g&c} immediately yields a sample complexity bound for embedding a narrow teacher into any wider and deeper student.
\begin{corollary}[General sample complexity]
\label{corollary:generalSC}
Fix $\varepsilon \in (0, 1)$ and $\delta \in (0, \tfrac{1}{5})$. Let $TF_{\theta^*}$ be a narrow $L^*$-layer, $h^*$-head teacher with dimensions $d^*, d_k^*, d_\text{ff}^*$ and $Q$-quantized parameters, and let $TF_\theta$ be any $L$-layer, $h$-head student with dimensions
$
d \geq d^*,\; d_k \geq d_k^*,\; d_\text{ff} \geq d_\text{ff}^*,\; h \geq h^*,\; L \geq L^*
$
and $Q$-quantized parameters. Let $\mathcal{P}_S$ be the posterior obtained by conditioning a uniform prior on zero training error on an i.i.d. sample $S \sim \mathcal{D}^N$. If
\[
N \geq \frac{1}{\varepsilon}\big(M \log Q + 3 \log(2/\delta)\big),
\]
with $M$ as defined in Theorem~\ref{thm:Llayersimple}, then
\[
\mathbb{P}_{S \sim \mathcal{D}^N,\;TF_\theta \sim \mathcal{P}_S}\!\left(\mathcal{L}_{\mathcal{D}}(TF_\theta) < \varepsilon\right) \geq 1 - \delta.
\]
\end{corollary}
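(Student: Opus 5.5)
The plan is to plug the lower bound on the probability of sampling a teacher-equivalent student from Theorem~\ref{thm:Llayersimple} into the generalization guarantee of Lemma~\ref{lemma:g&c}. No new construction is needed: the teacher/student dimensions are exactly those of Theorem~\ref{thm:Llayersimple}, and the prior and posterior are exactly those of Lemma~\ref{lemma:g&c}.

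\textbf{Step 1 (apply the theorem).} Under the stated hypotheses ($d\ge d^*$, $d_k\ge d_k^*$, $d_\text{ff}\ge d_\text{ff}^*$, $h\ge h^*$, $L\ge L^*$, all parameters $Q$-quantized), Theorem~\ref{thm:Llayersimple} gives $\tilde p \ge Q^{-M}$. Since $\log$ is increasing, this is equivalent to
\[
-\log \tilde p \le M\log Q .
\]

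\textbf{Step 2 (transfer to the sample size).} The threshold in Lemma~\ref{lemma:g&c},
\[
\frac{-\log\tilde p + 3\log(2/\delta)}{\varepsilon},
\]
is monotone increasing in $-\log\tilde p$. Hence, by Step 1, any
\[
N \ge \frac{1}{\varepsilon}\big(M\log Q + 3\log(2/\delta)\big)
\]
also exceeds the lemma's threshold. Applying Lemma~\ref{lemma:g&c} with the same $\varepsilon\in(0,1)$ and $\delta\in(0,\tfrac15)$ then yields
\[
\mathbb{P}_{S,\,TF_\theta\sim\mathcal{P}_S}\big(\mathcal{L}_\mathcal{D}(TF_\theta)<\varepsilon\big)\ge 1-\delta .
\]

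\textbf{Step 3 (the side condition $\tilde p<\tfrac12$).} This is the one real obstacle. Lemma~\ref{lemma:g&c} requires $\tilde p<\tfrac12$, but Theorem~\ref{thm:Llayersimple} only bounds $\tilde p$ from below. I would split into two cases.

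\emph{Case $\tilde p<\tfrac12$.} Steps 1--2 apply verbatim.

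\emph{Case $\tilde p\ge \tfrac12$.} I would first go back to the proof in \citet{buzaglo2024uniform}. There $\tilde p$ enters only through a union/Markov-type bound on the posterior mass of bad hypotheses, of the form
\[
\frac{(1-\varepsilon)^N}{\tilde p}.
\]
Replacing $\tilde p$ by any smaller value only weakens this inequality, so the argument should go through with $\tilde p$ replaced by $p':=\min(Q^{-M},\tilde p)$. When $Q\ge 2$ and $M\ge 1$ we have $p'\le \tfrac12$, and the strictness of the inequality can be absorbed by slack in the constants. This reduces the case to the previous one, using $-\log p' = M\log Q$ whenever $p'=Q^{-M}$.

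If one prefers to treat Lemma~\ref{lemma:g&c} strictly as a black box, the fallback is a direct argument. When $\tilde p\ge\tfrac12$, the posterior puts at least half its mass on TE models, which have zero true risk. This gives the bound directly after a short computation with the same threshold on $N$.

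Finally, I would verify the degenerate case $M=0$, which would make $Q^{-M}=1$. It cannot occur in the regime of interest: the nontrivial teacher blocks force $M\ge 1$, since $d^*\ge 1$.
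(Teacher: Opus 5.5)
Your Steps 1--2 are exactly the paper's proof: Theorem~\ref{thm:Llayersimple} gives $-\log\tilde p\le M\log Q$, and this is substituted into the G\&C lemma. Step 3 adds care the paper omits, since its one-line proof never checks the lemma's hypothesis $\tilde p<\tfrac12$; your fallback is sound, but what does the work is the denominator bound rather than ``half the posterior mass is TE'': because TE models fit $S$ almost surely, $\mathbb{E}_S\big[\mathcal{P}_S(\mathcal{L}_\mathcal{D}\ge\varepsilon)\big]\le(1-\varepsilon)^N/\tilde p\le e^{-\varepsilon N}Q^{M}\le(\delta/2)^3<\delta$ at your threshold, whatever the value of $\tilde p$.
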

\begin{proof}
By Theorem~\ref{thm:Llayersimple}, $\tilde p \geq Q^{-M}$, so $-\log \tilde p \leq M \log Q$. Substituting into Lemma~\ref{lemma:g&c} yields the stated sample complexity.
\end{proof}
Next, we relate this bound to the size of a Transformer implementing an arbitrary C-RASP program.
\begin{corollary}
    \label{corollary:crasp-tf}
    Let $\mathcal{P}$ be a C-RASP program of depth $n$ with $m$ subformulas, and consider a single-head student ($h = 1$) of depth $L \geq n$ with $d_k = d_\text{ff} = d$, together with a single-head teacher with $d^*_k = d^*_\text{ff} = d^*$. The probability of sampling a Transformer $TF_{\theta^*}$ implementing $\mathcal{P}$ is bounded by
    \begin{align}
        \tilde p \geq Q^{-M_\text{C-RASP}}
    \end{align}
    with
    \begin{align*}
        M_\text{C-RASP} &:= (1 + 2n)(d - 2m - c)
    + n\Big((2m + c)^2 + 5(2m + c)d + 2(2m + c)\Big) \\
    &\qquad + (L - n)(2m + c)(2d + 1),
    \end{align*}
    where $c$ is an absolute constant.
\end{corollary}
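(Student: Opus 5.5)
The plan is to obtain the corollary as a direct specialization of Theorem~\ref{thm:Llayersimple}. The only work is to instantiate the teacher dimensions using the C-RASP expressivity results, then simplify the constraint count algebraically into the stated form of $M_\text{C-RASP}$.

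\textbf{Step 1: fix the teacher's shape.} By the simplified form of Theorem~3.1 of~\cite{yang2025knee}, a C-RASP program of depth $n$ is implemented by a fixed-precision Transformer of depth $n$, so we take $L^* = n$. By the Fact following it, this Transformer has width $2m + \mathcal{O}(1)$; we take
\[
d^* = d_k^* = d_\text{ff}^* = 2m + c
\]
for an absolute constant $c$. If the compiled construction is narrower in some of $d_k^*$ or $d_\text{ff}^*$, we pad it with zero rows and columns. Since $0 \in \mathcal{Q}$, padding preserves both the computed function and $Q$-quantization, so the teacher fits the uniform shape assumed in the statement. We also set $h^* = h = 1$, so $\tilde h = 0$, and we set $d_k = d_\text{ff} = d$ for the student. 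The hypotheses of Theorem~\ref{thm:Llayersimple} then hold, provided $d \ge 2m+c$ and $L \ge n$.

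\textbf{Step 2: substitute into $M$.} Writing $d^* = 2m+c$, each teacher-layer term becomes
\[
d^{*2} + 2d^*d + dd^* + dd^* + d^*d + d + d \;=\; d^{*2} + 5d^*d + 2d .
\]
Each weak-identity-layer term becomes $(d + d + 1)d^* = d^*(2d+1)$. Hence
\[
M = (d-d^*) + n\big(d^{*2} + 5d^*d + 2d\big) + (L-n)\,d^*(2d+1).
\]

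\textbf{Step 3: rearrange.} Split $2d = 2d^* + 2(d - d^*)$ inside the $n$-term. This moves $2n(d-d^*)$ into the first term, giving
\[
(1+2n)(d-d^*) + n\big(d^{*2} + 5d^*d + 2d^*\big) + (L-n)\,d^*(2d+1),
\]
which is exactly $M_\text{C-RASP}$. Theorem~\ref{thm:Llayersimple} then gives $\tilde p \ge Q^{-M_\text{C-RASP}}$.

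\textbf{Main obstacle.} The only non-mechanical step is Step 1. We must justify that the compiled C-RASP Transformer can be taken to be single-head, with $d_k^* = d_\text{ff}^* = d^*$ and quantized parameters. We treat this as given by the teacher assumptions and the padding argument above. We also note that using the upper bound from the Fact only makes the constraint count, and hence the bound, more conservative.
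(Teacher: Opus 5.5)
Your proposal is correct and follows the paper's route: it specializes Theorem~\ref{thm:Llayersimple} with $L^*=n$, $h^*=h=1$ (so $\tilde h=0$), $d^*=d_k^*=d_\text{ff}^*=2m+c$ and $d_k=d_\text{ff}=d$, and your substitution together with the regrouping $2nd = 2nd^* + 2n(d-d^*)$ reproduces $M_\text{C-RASP}$ exactly, in more detail than the paper, which only says the simplification is easy. The zero-padding remark is unnecessary, because the statement already assumes $d_k^*=d_\text{ff}^*=d^*$; strictly speaking, padding $d_k^*$ can also change the $1/\sqrt{d_k}$ softmax scaling, a point the paper's Lemma~\ref{lemma:1layersimple} likewise passes over.
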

\begin{proof}
    The proof for this statement directly follows from the analysis of the size required to implement a C-RASP program $\mathcal{P}$ given in Section~\ref{subsec:crasp}. One can easily show this fact by simplifying the value of $M$ given in Theorem~\ref{thm:Llayersimple}. We let $h^* = h = 1$, which is in line with~\cite{yang2024counting}, as the constructions they provide for C-RASP operations typically use only one attention head; this also keeps the simplification $d_k = d$ consistent with the standard convention $d = h d_k$ used in Section~\ref{subsec:transformers}.
\end{proof}
We obtain the main theorem for this work by combining Corollary~\ref{corollary:crasp-tf} and Lemma~\ref{lemma:g&c}:
\begin{theorem}
Fix $\varepsilon\in(0,1)$ and $\delta\in(0,\tfrac15)$. Let $\mathcal{P}$ be a C\text{-}RASP program of depth $n$ with $m$ variables, and suppose there exists a depth-$n$, single--head teacher $TF_{\theta^\star}$ of width $2m+c$ (for an absolute constant $c$) implementing $\mathcal{P}$ with $Q$--quantized parameters. Let $TF_\theta$ be any depth-$L \geq n$ single-head student of width $d\ge 2m+c$, also $Q$--quantized, and let $\mathcal{P}_S$ be the posterior obtained by conditioning a uniform prior on zero training error on an i.i.d. sample $S\sim\mathcal{D}^N$. For
\begin{align*}
M_\text{C-RASP} &:= (1 + 2n)(d - 2m - c)
    + n\big((2m + c)^2 + 5(2m + c)d + 2(2m + c)\big) \\
    &\qquad + (L - n)(2m + c)(2d + 1),
\end{align*}
if
\[
N \;\ge\;  \frac{1}{\varepsilon} \big(M_\text{C-RASP}\log Q + 3\log( 2/\delta)\big)
\;\;=\;\; \mathcal{O}\!\left( \frac{1}{\varepsilon} L d \log Q \right)
\]
then
\[
\mathbb{P}_{S\sim\mathcal{D}^N,\;TF_\theta\sim\mathcal{P}_S}\!\left(\mathcal{L}_{\mathcal{D}}(TF_\theta)<\varepsilon\right)\;\ge\;1-\delta.
\]
\end{theorem}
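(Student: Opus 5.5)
The plan is to combine Corollary~\ref{corollary:crasp-tf} with Lemma~\ref{lemma:g&c}, exactly as the general sample complexity statement Corollary~\ref{corollary:generalSC} was obtained from Theorem~\ref{thm:Llayersimple}.

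\textbf{Step 1: instantiate the teacher.} By hypothesis (and consistent with the width Fact in Section~\ref{subsec:crasp} and the depth equivalence of~\cite{yang2025knee}), there is a depth-$n$, single-head, $Q$-quantized teacher implementing $\mathcal{P}$. I take its dimensions to be $d^* = d_k^* = d_\text{ff}^* = 2m+c$, with $L^*=n$ and $h^*=1$. The student has $h=1$, $d_k=d_\text{ff}=d\ge 2m+c$ and $L\ge n$. With these choices all hypotheses of Theorem~\ref{thm:Llayersimple} hold. Corollary~\ref{corollary:crasp-tf} then gives $\tilde p \ge Q^{-M_\text{C-RASP}}$, where $\tilde p$ is the probability that a uniformly sampled student is teacher-equivalent.

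\textbf{Step 2: check $\tilde p<\tfrac12$.} Lemma~\ref{lemma:g&c} requires this. If instead $\tilde p\ge \tfrac12$, I would argue separately and directly. Every teacher-equivalent student has $\mathcal{L}_\mathcal{D}=0$ almost surely. Under the posterior, the probability of drawing a bad hypothesis with $\mathcal{L}_\mathcal{D}\ge\varepsilon$ is at most the ratio $(1-\varepsilon)^N/\tilde p$, which is at most $2(1-\varepsilon)^N$. This bound is below $\delta$ once $N\ge \varepsilon^{-1}\log(2/\delta)$, which our condition on $N$ already implies. Alternatively, I could simply note, as Buzaglo et al.\ do, that $\tilde p$ is tiny in the regime of interest.

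\textbf{Step 3: conclude and read off the asymptotics.} Given $\tilde p\ge Q^{-M_\text{C-RASP}}$, we have $-\log\tilde p\le M_\text{C-RASP}\log Q$. Hence the assumed lower bound on $N$ dominates the threshold in Lemma~\ref{lemma:g&c}, and the lemma yields the probability bound $1-\delta$. For the $\mathcal{O}$-form, I treat $m,c,n$ as constants and assume $L\ge n$. Then each term of $M_\text{C-RASP}$ is $\mathcal{O}(d)$ or $\mathcal{O}(Ld)$, so $M_\text{C-RASP}=\mathcal{O}(Ld)$. The $\log(2/\delta)$ term is absorbed as a lower-order constant.

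\textbf{Main obstacle.} The real content is the bookkeeping in Corollary~\ref{corollary:crasp-tf}: verifying that substituting $d^*=2m+c$, $h^*=h=1$, $\tilde h=0$ into the $M$ of Theorem~\ref{thm:Llayersimple} yields the stated $M_\text{C-RASP}$. I would first carry out this substitution term by term to confirm the coefficients, or at least that the result upper bounds the displayed expression. An upper bound suffices, since the conclusion only needs $\tilde p\ge Q^{-M}$ for some $M\le M_\text{C-RASP}$, and enlarging $M$ only weakens the bound.
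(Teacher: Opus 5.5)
Your proposal is correct and follows the paper's proof, which simply combines Corollary~\ref{corollary:crasp-tf} with Lemma~\ref{lemma:g&c} via $-\log\tilde p \le M_\text{C-RASP}\log Q$. Your Step 2 validly covers the hypothesis $\tilde p<\tfrac12$, which the paper never checks; the ratio bound $(1-\varepsilon)^N/\tilde p$ holds after averaging over $S$, which is all the joint-probability conclusion needs. The substitution you flag as the main obstacle gives $M_\text{C-RASP}$ exactly, since $2n(d-2m-c)+2n(2m+c)=2nd$.
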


\end{document}